\title{When to Trust Your Data: Enhancing Dyna-Style Model-Based Reinforcement Learning With Data Filter}
\author{
    Yansong Li\equalcontrib\textsuperscript{\rm $\P$}\footnote{ Corresponding author},
    Zeyu Dong\equalcontrib\textsuperscript{\rm $\ddag$},
    Ertai Luo\textsuperscript{\rm $\ddag$},
    Yu Wu\textsuperscript{\rm $\S$},
    Shuo Wu\textsuperscript{\rm $\P$},
    Shuo Han\textsuperscript{\rm $\P$}
}
\theoremstyle{plain}
\newtheorem{theorem}{Theorem}[section]
\newtheorem{proposition}[theorem]{Proposition}
\theoremstyle{definition}
\newtheorem{definition}[theorem]{Definition}
\newtheorem{assumption}[theorem]{Assumption}
\theoremstyle{remark}
\definecolor{darkgreen}{rgb}{0,0.5,0}
\newcommand{\restore@Environment}[1]{%
    \AtBeginDocument{%
        \csletcs{#1*}{#1}%
        \csletcs{end#1*}{end#1}%
    }%
}
\forcsvlist\restore@Environment{alignat,equation,gather,multline,flalign,align}
\DeclareAcronym{mbpo}{
    short = MBPO,
    long = model-based policy optimization,
}
\DeclareAcronym{sac}{
    short = SAC,
    long = soft actor-critic
}
\DeclareAcronym{knn}{
    short = KNN,
    long = K-Nearest Neighbors
}
\DeclareAcronym{mdp}{
    short = MDP,
    long = Markov decision process
}
\DeclareAcronym{DQN}{
    short = DQN,
    long = Deep Q-Network
}
\DeclareAcronym{ood}{
    short = OOD,
    long = Out-of-distribution 
}
\DeclareAcronym{sota}{
    short = SOTA,
    long = state-of-the-art
}
\DeclareAcronym{ann}{
    short = ANN,
    long = Approximate Nearest Neighbour 
}
\newcommand{\pp}{\mathbb{P}}
\newcommand{\pps}{\mathbb{P}^{*}}
\newcommand{\hpp}{\hat{\mathbb{P}}}
\newcommand{\hp}{\hat{\mathbb{P}}}
\newcommand{\ee}{\mathbb{E}}
\newcommand{\dreal}{\mathcal{D}_{\mathrm{real}}}
\newcommand{\dest}{\mathcal{D}_{\mathrm{est}}}
\newcommand{\dreduct}{\mathcal{D}_{\mathrm{reduct}}}
\newcommand{\policy}{\pi_{\phi}}
\newcommand{\hs}{\hat{s}}
\newcommand{\ha}{\hat{a}}
\newcommand{\muhat}{\hat{\mu}}
\begin{document}

\maketitle

\begin{abstract}


    Reinforcement learning (RL) algorithms can be divided into two classes: model-free algorithms, which are sample-inefficient, and model-based algorithms, which suffer from model bias.
    Dyna-style algorithms combine these two approaches by using simulated data from an estimated environmental model to accelerate model-free training.
    However, their efficiency is compromised when the estimated model is inaccurate. Previous works address this issue by using model ensembles or pretraining the estimated model with data collected from the real environment, increasing computational and sample complexity.
    To tackle this issue, we introduce an \ac{ood} data filter that removes simulated data from the estimated model that significantly diverges from data collected in the real environment.
    We show theoretically that this technique enhances the quality of simulated data. 
    With the help of the OOD data filter, the data simulated from the estimated model better mimics the data collected by interacting with the real model. This improvement is evident in the critic updates compared to using the simulated data without the OOD data filter.
    Our experiment integrates the data filter into the \ac{mbpo} algorithm. The results demonstrate that our method requires fewer interactions with the real environment to achieve a higher level of optimality than \ac{mbpo}, even without a model ensemble.

\end{abstract}

\section{Introduction}

Reinforcement learning (RL)~\citep{sutton_reinforcement_2018,li_deep_2018} algorithms can be classified into model-free and model-based algorithms.
Model-free algorithms~\citep{mnih_human-level_2015,haarnoja_soft_2018} do not maintain an explicit form of the environment dynamics and can be applied in environments whose dynamics are hard to model.
However, one drawback of model-free algorithms is that they require a large number of interactions with the environment during training, which can be impractical in physical systems, where each interaction takes place in the real world.

In comparison, model-based algorithms~\citep{doll_ubiquity_2012,wang_benchmarking_2019,chua_deep_2018,deisenroth_pilco_2011} maintain an estimated model of the environment dynamics, such as neural networks for the transition kernel and the reward function. Traditional model-based algorithms allow for planning based on the estimated model, though they may suffer from model bias, leading to policies that are far from optimal. Without model bias, model-based methods can find optimal policies with fewer interactions with the environment.

Besides traditional model-based algorithms, Dyna-style algorithms~\citep{sutton_dyna_1991, janner_when_2019, wang_benchmarking_2019} also maintain an estimated model. Instead of relying on planning with the estimated model, Dyna-style algorithms use data simulated from the estimated model to accelerate model-free methods while updating the estimated model. This approach reduces the impact of bias in the estimated model. One example of Dyna-style algorithms is \acl{mbpo} (MBPO)~\citep{janner_when_2019}, which estimates the model of the environment and simulates data from the estimated model to speed up \ac{sac}~\citep{haarnoja_soft_2018}.

One weakness of Dyna-style algorithms is that some data simulated from the estimated model do not resemble those obtained from interactions with the real world. Using such data may instead slow down model-free training. Previous works address this issue either by collecting a large amount of data through random interactions with the real environment to pre-train the estimated model before model-free training, which is costly in the real world due to the numerous interactions required, or by simply reducing the rollout length, which is conservative. In fact, \citet{janner_when_2019} and \citet{sutton_dyna_1991} use these two strategies together. Additionally, to reduce the variance of the data simulated from estimated models, \citet{janner_when_2019} used an ensemble of models, which is computationally expensive.

In this paper, we tackle this problem by introducing a method called \emph{out-of-distribution data filter}. In a nutshell, our method filters out data points simulated from the estimated model if the data points are far from those interacted with the real model before sending the data for model-free training. 
This method can be regarded as an instance of \ac{ood} detection~\citep{nasvytis_rethinking_2024,liu_towards_2023,yang_generalized_2024,hsu_generalized_2020} with the real distribution being the transition kernel and the exploration policy of the real environment. We theoretically show that the out-of-distribution data filter not only lowers the drift of the rollout trajectory but also lowers the parametric drift of \(Q\)-network updates. Thus, with the help of the \ac{ood} data filter, the data simulated from the estimated model better mimics the data collected by interacting with the real model in terms of \(Q\)-network updates than it would without the \ac{ood} data filter.

Our experiments are conducted in the \texttt{MuJoCo}~\citep{todorov_mujoco_2012} environment.
The results show that when applying the out-of-distribution data filter to \ac{mbpo}, it achieves \ac{sota} performance even without using a model ensemble.

\section{Related work}

\emph{Dyna-style algorithms}: An early example of a Dyna-style algorithm is Dyna-\(Q\)~\citep{sutton_dyna_1991}, which estimates the transition kernel during
\(Q\)-learning and uses data simulated from the estimated model to accelerate the training of model-free \(Q\)-learning. In deep reinforcement learning, \ac{mbpo}~\citep{janner_when_2019} employs an ensemble of neural networks to estimate both the transition kernel and the reward function. The algorithm then uses simulated data from the estimated model to expedite the training of \ac{sac}~\citep{haarnoja_soft_2018}. More recently, \citet{zheng_is_2022} modify the \ac{mbpo} algorithm by introducing an extra critic regularization term in the training of \ac{sac}, and their method achieves similar performance to \ac{mbpo} in \texttt{MuJoCo}~\citep{todorov_mujoco_2012,brockman_openai_2016} environment without model ensemble.

Besides \ac{mbpo}, earlier works that apply Dyna-style techniques to deep RL include the Model-Ensemble Trust-Region Policy Optimization \citep{kurutach_model-ensemble_2018} and its modifications \citep{clavera_model-based_2018,luo_algorithmic_2021}. \citet{luo_algorithmic_2021} provide a theoretical guarantee of monotonic improvement. However, their experiments on \texttt{MuJoCo} do not demonstrate state-of-the-art performance, so we did not include their algorithms in our experiments.

Several works explore the trade-off between data simulated from the estimated model and the data interacted with the real model; see~\citet{lai_effective_2022,shen_adaptation_2023} for example.
In this work, we introduced the out-of-distribution data filter. We achieved SOTA performance (requiring less interaction to achieve the same level of suboptimality) compared to \ac{mbpo} on the \texttt{MuJoCo} environment. Also, to the best of our knowledge, we are the first to analyze when Dyna-style algorithms perform well from an out-of-distribution perspective. 

\emph{Model-based reinforcement learning}: Dyna-style algorithms are a subset of model-based reinforcement learning algorithms. In addition to Dyna-style methods, there are other approaches such as PILCO~\citep{deisenroth_pilco_2011}, iLQG~\citep{tassa_synthesis_2012-1}, and Guided Policy Search~\citep{levine_learning_2014,levine_learning_2015,zhang_learning_2016,finn_guided_2016}. \citet{wang_benchmarking_2019} provides a review that summarizes commonly used model-based RL algorithms.

\emph{Out-of-distribution detection}: Out-of-distribution detection~\citep{hsu_generalized_2020,yang_generalized_2024,sedlmeier_uncertainty-based_2019} is a technique used in machine learning to identify data points that deviate from the expected distribution. Recently, \citet{nasvytis_rethinking_2024} introduced several out-of-distribution concepts to reinforcement learning. Specifically, the \ac{ood} problem we focus on includes sensory and semantic anomaly detection, as defined in \citet{nasvytis_rethinking_2024}, which are used to determine whether states are generated from the real environmental model.

\emph{Biased gradient learning}: In Dyna-style algorithms, data simulated from the estimated model is sent to the model-free algorithm for training. The gradient used for the update rule contains the error that is derived from the estimation error of the estimated model. Based on that, the model-free training can be reformulated as a Biased Gradient Learning problem. Previous works~\citep{ajalloeian_convergence_2021,wang_finite-time_2020,li_accelerating_2022} considered the additive noise and showed error bounds in gradient-based optimization methods and the convergence rate when error is bounded under certain conditions. In this work, we relax the requirement of additive noise and prove that the shifting of \( Q \)-network updates is upper bounded after each update with smoothness assumptions.

\section{Preliminary}

A \ac{mdp} is a tuple $(S, A, \pp, r, \gamma, \mu)$ where \( S \) is the state space, \( A \) is the action space, \( \pp \colon S \times A \to \Delta(S) \) is the transition kernel, \( r \colon S \times A \to \mathbb{R}\) is the reward function, and \( \gamma \in [0,1] \) is the discount factor. The term \( \Delta(S) \) represents the set of all probability distributions over the state space \( S \). A parameterized policy \( \pi_\phi \) parameterized by \( \phi \) is defined as \( \policy \colon S \to \Delta(A) \).
The initial state \(s_1\) is sampled from a distribution \(\mu\).
We abuse the notation of \( \pp \) and define the transition of a policy as
\begin{equation}
    \pp (s, \policy) \triangleq \ee_{a \sim \pi_\phi(s)}  \pp(s,a).
\end{equation}
Throughout this paper, the term \emph{model} refers to the transition kernel. The reward function is known to the learner. One can easily generalize our result to the case that the model refers to both the transition kernel and the reward function, see~\citet{janner_when_2019} for example.
In the following analysis, we consider an \ac{mdp} with continuous state and action spaces. 
The transition kernel is assumed to be Gaussian and defined as
\begin{equation}
    \mathbb{P} (s, a) \triangleq (\mu (s, a),
    \sigma^2 (s, a))^\top
\end{equation}
and
\[
    \mathbb{P} (s, \pi_{\phi} (s))
    \triangleq
    \mathbb{E}_{a \sim \pi_{\phi} (s)} \left[(\mu (s, a),
        \sigma^2 (s, a))^\top \right].
\]
We abuse the notation \( \pp(s,a) \) to represent the Gaussian distribution \( \mathcal{N} (\mu (s, a), \sigma^2 (s, a)) \) with mean \( \mu (s, a) \) and variance \( \sigma^2 (s, a) \). Assuming that the transition distribution is Gaussian is common in continuous state \ac{mdp}s~\citep{deisenroth_pilco_2011,janner_when_2019} and robotics~\citep{peters_reinforcement_2003,kober_reinforcement_2014,brunke_safe_2021,johannink_residual_2019}

Also, we use the following notations:
\begin{equation}
    \mathbb{E}_{a \sim \pi_{\phi} (s)} \mu (s, a)
    \triangleq
    \mu_{\pi_{\phi} (s)} (s)
\end{equation}
and
\begin{equation}
    \mathbb{E}_{a \sim \pi_{\phi} (s)} \sigma^2 (s, a)\triangleq
    \sigma^2_{\pi_{\phi}(s)} (s).
\end{equation}
Thus,
\[
    s' \sim \mathbb{P} (s, \pi_{\phi} (s)) \Leftrightarrow s' \sim \mathcal{N}
    (\mu_{\pi_{\phi} (s)} (s), \sigma^2_{\pi_{\phi} (s)} (s)) .
\]
For simplicity, our theoretical results are derived using a single-variable Gaussian distribution. These results can be easily generalized to multivariate Gaussian distributions by replacing the mean with a vector and the variance with a covariance matrix. See \citet{deisenroth_pilco_2011} and \citet{janner_when_2019} for examples.

\subsection{Dyna-style algorithm}

In this subsection, we introduce Dyna-style algorithms, the idea of Dyna-style algorithms is summarized in Algorithm~\ref{algo:modelbasedaddmodelfree}.

\begin{algorithm}
    \caption{\textsc{Dyna-Style learning}}
    \label{algo:modelbasedaddmodelfree}
    \begin{algorithmic}[1]
        \Require pretrained model \( \hat{\pp}^1 \), total number of episodes \( K \), rollout length \(L\), batch size \(N\), rollout size \( M = N L \), model estimator \( \texttt{ModelEstimator} \), time horizon \( H \), model-free algorithm \texttt{ModelFreeAlgo}.
        \Ensure \( \pi_{\phi}\)
        \State \( \dreal \gets \emptyset  \), \( \dest \gets \emptyset \), \(\hp \gets \hp^1\)
        \For{\( k \in [K] \)}
        \State \(s_1 \sim \mu\)
        \For{\( h \in [H] \)}
        \State uniformly sample states from \(\dreal\), perform \(L\)-step rollout with \(\hp\) under policy \(\policy\), repeat \(N\) times until \(M \) data points are collected. Store the data in \(\dest\)   \label{algo:collect_data_est}
        \State \( \phi \gets \texttt{ModelFreeAlgo}(\phi, \dreal, \dest) \)
        \State sample \(a_h\) from \(\policy(s_h)\) and sample \(s_{h+1}\) from \(\pp^*(s_h,a_h)\), add \( (s_h,a_h,s_{h+1})\) to \( \dreal \).
        \State \( \hpp \gets \texttt{ModelEstimator}(\hpp , \dreal) \)  \label{algo:collect_data_real}
        \EndFor
        \EndFor
        \State \Return \( \pi_{\phi} \)
    \end{algorithmic}
\end{algorithm}
The main idea of Algorithm~\ref{algo:modelbasedaddmodelfree} is to use the data \( \dest \) simulated with the estimated model together with the data \( \dreal \) interacted from the real model to speed up the model-free algorithm \( \texttt{ModelFreeAlgo} \). The model-free algorithm \( \texttt{ModelFreeAlgo} \) can be \( Q \)-learning as in Dyna-\( Q\)~\citep{sutton_dyna_1991} or \ac{sac}~\citep{haarnoja_soft_2018} as in \ac{mbpo}~\citep{janner_when_2019}.

The model of the real environment is denoted by \( \pps \). The estimated model is denoted as \( \hpp \). The estimated model is updated by \( \texttt{ModelEstimator} \) using data \( \dreal \) collected by interacting with the real model \(\pp^*\). The \( \texttt{ModelEstimator} \) method used by Dyna-\( Q \) and \ac{mbpo} is the maximum likelihood estimator.
Also, Line~\ref{algo:collect_data_real} updates the estimated model at every step. One can also modify this so the model can be updated less frequently; see Algorithm 2 in \citet{liu_re_2019} for example.

We denote the pretrained estimated model as \( \hpp^1 \). The pre-trained model is trained by pre-collected data under \( \pps \). The pre-collected data \( (s,a,s') \) to train \(\hp^1\) is collected by randomly sampling \( (s,a) \) from \( S \times A \) and sample the next state \( s' \) from the real model, i.e., \( s' \sim \pps(s,a) \).
We use \( (s_h,a_h,s'_h = s_{h + 1}) \) to represent a data point in \(\dreal\) that is collected by interacting with the real model \(\pp^*\) in the \( h \)-th step. 

The set of data points simulated from the estimtated model is denoted by \(\dest\). Each data point is a tuple consisting of a state, an action, and the next state. The simulation method is given by  Line~\ref{algo:collect_data_est}. Line~\ref{algo:collect_data_est} is a sample strategy named \emph{branched rollout}.  As introduced by \citet{sutton_dyna_1991}, each round (we sample \(N\) rounds) of branched rollout takes a state from \(\dreal\) and simulates a trajectory based on \(\hp\). The blue lines in Fig.~\ref{fig:OOD data filter} visualize several rollouts branched from red lines, where points in red lines are taken from \(\dreal\).

What's more, Algorithm~\ref{algo:modelbasedaddmodelfree} is an on-policy algorithm, meaning the policy used to collect data is the policy we want to train. One can easily modify the algorithm to be off-policy by replacing the policy in Line~\ref{algo:collect_data_est} and Line~\ref{algo:collect_data_real} with an exploration policy, such as \( \epsilon \)-greedy policy.

Algorithm~\ref{algo:modelbasedaddmodelfree} only maintains a single estimated model. In contrast, the original \ac{mbpo} paper maintains an ensemble of models to reduce the variance of data simulated from estimated models, which increases the computational complexity of the algorithm. 

One issue of previous Dyna-style algorithms such as \ac{mbpo} is that these algorithms randomly select data points that are simulated from the estimated model without taking the quality of data into consideration. If a data point is far away from the data interacted with the real environment when the estimated model is inaccurate, the estimation error is not negligible and will be propagated to the next data point. Eventually, the error stemming from the estimated model will undermine the model-free training. 
As a consequence, the rollout length \(L\) of \ac{mbpo} algorithm must be short (it is set to be one in most cases, in fact). According to numerical observations, a larger rollout length \(L\) causes larger error. Fig.~\ref{fig:long_rollout} shows the experiment conducted in the \texttt{InvertedPendulum-V2} environment. 
In this experiment, we fix the rollout size \(M\) and chose two different rollout lengths: \(L = 1\) and \(L = 10\). All other parameters are the same as in \citet{janner_when_2019}. As shown in Fig.~\ref{fig:long_rollout}, the \ac{mbpo} with a larger rollout length \(L = 10\) requires more interactions to reach the same level of optimality.

\begin{figure}[t]
    \centering
    \includegraphics[width=1\columnwidth]{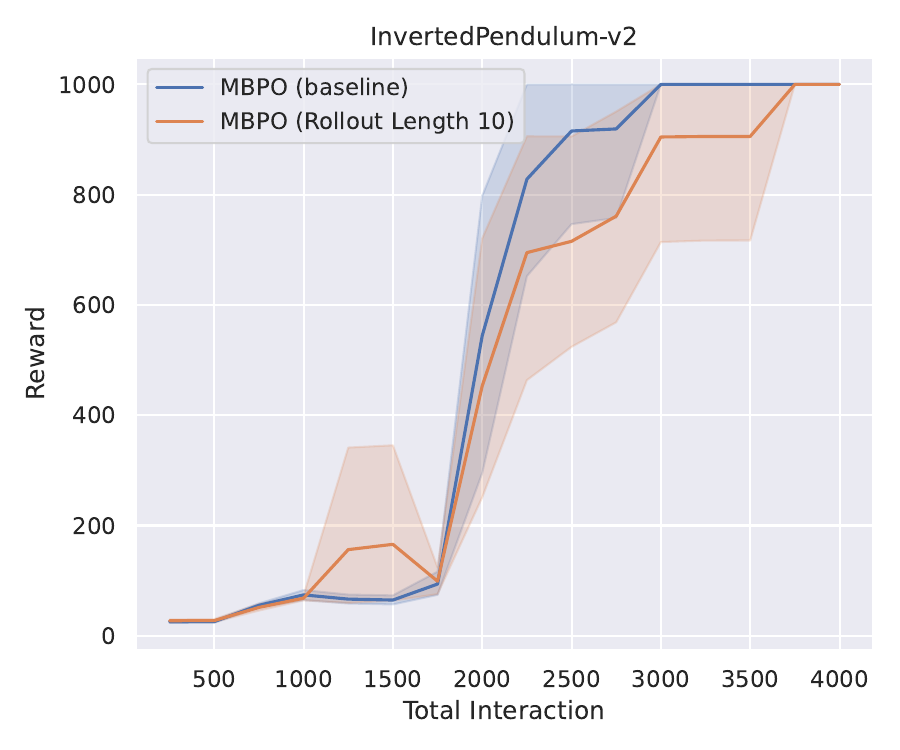}
    \caption{Error accumulation for larger rollout length in \ac{mbpo}}
    \label{fig:long_rollout}
\end{figure}

\section{Out-of-distribution data filter}

In Dyna-style algorithms, data points on \( \dreal \) are generated by \( \policy \) and \( \pp^* \). To enhance the performance of Dyna-style algorithms, our approach is to filter out data points in \( \dest \) that are not likely to be generated by \( \policy \) and \( \pp^* \). Based on these intuitions, we introduce our Out of Distribution Data Filter in Algorithm~\ref{algo:DataFilter}. 

\begin{algorithm}
    \caption{\textsc{OOD-DataFilter}}
    \label{algo:DataFilter}
    \begin{algorithmic}[1]
        \Require \( \dreal \), \( \dest \), reject level \( \epsilon_k \)
        \Ensure \( \dreduct \)
        \State \( \dreduct \gets \dest \)
        \For{\( (s,a,s') \in \dest \)}
        \If{ \(\forall (s^*, a^*, \cdot) \in \dreal\), such that \(\|(s,a)^\top - (s^*,a^*)^\top\| \geq \epsilon_k \) } \label{algo:rej_rule}
        \State eliminate \((s,a,s') \) from \(\dreduct\)
        \EndIf
        \EndFor
        \State \Return \( \dreduct \)
    \end{algorithmic}
\end{algorithm}

\begin{figure}[t]
    \centering
    \includegraphics[width=1\columnwidth]{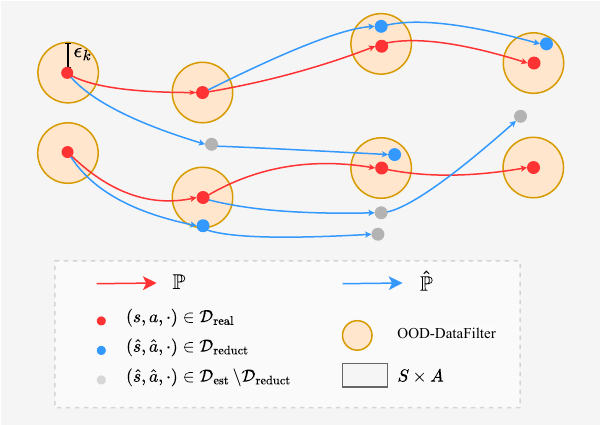}
    \caption{Visualization of out-of-distribution data filter.}
    \label{fig:OOD data filter}
\end{figure}

Line~\ref{algo:rej_rule} is the \ac{ood} rule, where we detect and discard all those state-action pairs in \( \dest \) that are far away from any state-action pairs in \( \dreal \) as shown in Fig.~\ref{fig:OOD data filter}. 
The term \( \epsilon_k \) is the threshold of the distance between the current state-action pair and the closest state-action pair in \( \dreal \). We call \( \epsilon_k \) the \emph{reject level}.

Note that Line~\ref{algo:rej_rule} never eliminates the first-step data points from \(\dest\) if the exploration policy is deterministic, regardless of how small the reject level \(\epsilon_k\) is, if we use branched rollouts as in \citet{janner_when_2019}. This is because, in branched rollouts, the initial states are selected from \(\dreal\), meaning the closest state in \(\dreal\) is itself. 
Also, we want \(\epsilon_k\) to increase as the episode number \(k\) grows. As \(k\) increases, our estimated model becomes more accurate, allowing us to trust the data simulated from it more and permitting greater shifts, resulting in a larger \(\epsilon_k\) in our algorithm. The choice of \(\epsilon_k\) and the computation in Line~\ref{algo:rej_rule} will be discussed in Section~\ref{sec:exp}.

Note that Algorithm~\ref{algo:DataFilter} differs from an early stop strategy (smaller \(L\)). As shown in Fig.~\ref{fig:OOD data filter}, we first simulate the entire trajectory with \(L\) steps (depicted by the blue line with \(L=2\)), and then the data filter removes only the data points that are far from \(\dreal\). Notice that we reject the intermediate point in the trajectories of data points in the upper-left blue line instead of the last point, highlighting that Algorithm~\ref{algo:DataFilter} is distinct from an early stop strategy.

With \ac{ood} data filter, Dyna-style algorithms depicted in Algorithm~\ref{algo:modelbasedaddmodelfree} can be enhanced into Algorithm~\ref{algo:dyna_ood}. \ac{ood} data filter is inserted in line~\ref{algo3: OOD-data filter}. Instead of \(\dest\), the input to \texttt{ModelFreeAlgo} in line~\ref{algo3:phi update} is replaced by \(\dreduct\).
\begin{algorithm}
    \caption{\textsc{Dyna-Style learning with OOD-DataFilter}}
    \label{algo:dyna_ood}
    \begin{algorithmic}[1]
        \Require pretrained model \( \hat{\pp}^1 \), total number of episodes \( K \), rollout length \(L\),  batch size \(N\), rollout size \(M = N L\), model estimator \( \texttt{ModelEstimator} \), time horizon \( H \), model-free algorithm \texttt{ModelFreeAlgo}, OOD data filter \texttt{OOD-DataFilter}, reject level \(\epsilon_k\)
        \Ensure \( \pi_{\phi}\)
        \State \( \dreal \gets \emptyset  \), \( \dest \gets \emptyset \), \(\hp \gets \hp^1\)
        \For{\( k \in [K] \)}
        \State \(s_1 \sim \mu\)
        \For{\( h \in [H] \)}
        \State uniformly sample states from \(\dreal\), perform \(L\)-step rollout with \(\hp\) under policy \(\policy\), repeat \(N\) times until \(M \) data points are collected. Store the data in \(\dest\)   
        \State \( \dreduct \gets   \texttt{OOD-DataFilter}(\dreal, \dest, \epsilon_k) \)\label{algo3: OOD-data filter}
        \State \( \phi \gets \texttt{ModelFreeAlgo}(\phi, \dreal, \dreduct) \)\label{algo3:phi update}
        \State sample \(a_h\) from \(\policy(s_h)\) and sample \(s_{h+1}\) from \(\pp^*(s_h,a_h)\), add \( (s_h,a_h,s_{h+1})\) to \( \dreal \).
        \State \( \hpp \gets \texttt{ModelEstimator}(\hpp , \dreal) \)
        \EndFor
        \EndFor
        \State \Return \( \pi_{\phi} \)
    \end{algorithmic}
\end{algorithm}

To show the reason why Algorithm~\ref{algo:dyna_ood} leads to better acceleration, we present two results in the following subsections: 

\begin{enumerate}
    \item \emph{Bounding the data shifting}: Closer initial states lead to less shifting in trajectories simulated by the estimated model (Theorem~\ref{theorem: next_state_bound}).
    \item \emph{Bounding the value function update shifting}: Less shifting in data points of a trajectory leads to a closer update in \( Q \)-network (Theorem~\ref{theorem: bound on excess error of value function}). 
\end{enumerate}

With the above two results, we can conclude that the filtered dataset \(\dreduct\) has higher quality compared to \(\dest\), as it more closely mimics \(\dreal\) in terms of \(Q\)-network updates. 
Thus, Algorithm~\ref{algo:DataFilter} improves the efficiency of Dyna-style algorithms, as demonstrated in Algorithm~\ref{algo:dyna_ood}. In the next two subsections, we will formally state and prove these results.

\subsection{Bounding the data shifting}

We denote \( (s, a) \) as the state-action pair in a data point taken from \( \dreal \) and \( (\hs, \ha) \) as the state-action pair in a data point taken from \( \dest \). The next state interacted with the real model is \( s' \sim \pp(s, a) =\mathcal{N} (\mu(s, a), \sigma^2(s, a ))\) and the next state simulated from the estimated model is \( \hs' \sim \hpp(\hs, \ha) = \mathcal{N} (\hat{\mu}(\hs, \ha), \hat{\sigma}^2(\hs, \ha )) \).

\begin{assumption}\label{ass:trans_lips}
    The mean of the estimated transition kernel is Lipschitz, i.e.,
    \begin{equation}
        \|
        \muhat (s,a) - \muhat (\hs, \ha)
        \|
        \leq
        L_{S \times A} \| (s,a)^\top - (\hs,\ha)^\top \|.
    \end{equation}
    for all \((s,a) \in S \times A \) and \((\hat{s},\hat{a}) \in S \times A \).
\end{assumption}
Assumption~\ref{ass:trans_lips} can be satisfied by adding a regularization term in the update of the model neural network; see~\citet{zheng_is_2022} for example.

In order to measure the model estimation error, we introduce the following definition:
\begin{definition}[Kernel density estimation (KDE) and efficient sample size]
    We use a kernel function \(K_h\) with
    bandwidth \(h\) to estimate \(\mu (s, a)\) and \(\sigma (s, a)\):

    \[ \hat{\mu} (s, a) = \frac{\sum_{i = 1}^N K_h ((s, a) - (s_i, a_i))
            s_i'}{\sum_{i = 1}^N K_h ((s, a) - (s_i, a_i))},\]

    \begin{equation}
        \hat{\sigma}^2 (s,
        a) = \frac{\sum_{i = 1}^N K_h ((s, a) - (s_i, a_i)) (s_i' - \hat{\mu} (s,
            a))^2}{\sum_{i = 1}^N K_h ((s, a) - (s_i, a_i))} .
    \end{equation}

    The next state \(s_i'\) in sample \(i\) is drawn from \(s_i' \sim \mathbb{P}
    (s_i, a_i) =\mathcal{N} (\mu (s_i, a_i), \sigma^2 (s_i, a_i))\).
    The effective sample size \(n_{\mathrm{eff}}
    (s, a)\) is defined as:

    \[ n_{\mathrm{eff}} (s, a) = \sum_{i = 1}^N K_h ((s, a) - (s_i, a_i)) . \]
\end{definition}
If we choose kernel function \( K_h \) as index function, i.e., \( K_h(x) = 1 \) if \(x = 0\) and \(0\) otherwise. The KDE estimation reduces to the maximum likelihood estimation with \( n_{\mathrm{eff}} \) being the number of visitations.

\begin{theorem}\label{theorem: next_state_bound}
    Given data point \((s, a, \cdot) \in \mathcal{D}_{\mathrm{real}}\) and \((\hs, \ha, \cdot) \in \mathcal{D}_{\mathrm{est}}\). If \( \sigma^2\) is nonzero for all state-action pairs, then, with probability \((1 - \epsilon)(1- \epsilon_{\mathrm{KDE}})\), we have
    \begin{align}
        \| \hat{s}' (\hat{s}, \hat{a}) - s' (s, a) \|
         & \leq
        L_{S\times A} \| (\hat{s},\hat{a})^\top - (s,a)^\top \|                                                                     \\
         & \quad +
        \sqrt{\frac{\sigma^2 (s,a)}{n_{\mathrm{eff}} (s, a) \epsilon_{\mathrm{KDE}}}} \\
         & \quad +
        \sqrt{\frac{\sigma^2(\hat{s},\hat{a}) +
        \hat{\sigma}^2 (s,a)}{\epsilon}},
    \end{align}
    where \(\hat{s}' (\hat{s}, \hat{a}) \sim \hat{\mathbb{P}}
    (\hat{s}, \hat{a}) \) and \(s' (s, a) \sim \mathbb{P}
    (s, a)\).
\end{theorem}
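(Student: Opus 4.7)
The plan is to decompose $\hat{s}'(\hat{s},\hat{a}) - s'(s,a)$ into three additive pieces, each matching one term of the bound, and then apply the triangle inequality. A natural split is
\[
\hat{s}' - s' = \bigl[\hat{\mu}(\hat{s},\hat{a})-\hat{\mu}(s,a)\bigr] + \bigl[\hat{\mu}(s,a)-\mu(s,a)\bigr] + \bigl[(\hat{s}'-\hat{\mu}(\hat{s},\hat{a})) - (s'-\mu(s,a))\bigr],
\]
so that the three pieces capture, respectively, the drift between the two input state–action pairs, the systematic KDE estimation error, and the independent sampling noise of the two fresh Gaussian draws.

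The first piece is handled directly by Assumption~\ref{ass:trans_lips}, giving the Lipschitz term $L_{S\times A}\|(\hat{s},\hat{a})^\top-(s,a)^\top\|$. For the second piece, $\hat{\mu}(s,a)$ is a kernel-weighted average of independent Gaussian observations, whose variance around $\mu(s,a)$ scales as $\sigma^2(s,a)/n_{\mathrm{eff}}(s,a)$; a Chebyshev-type inequality then yields the bound $\sqrt{\sigma^2(s,a)/(n_{\mathrm{eff}}(s,a)\,\epsilon_{\mathrm{KDE}})}$ with probability at least $1-\epsilon_{\mathrm{KDE}}$. For the third piece, $\hat{s}'-\hat{\mu}(\hat{s},\hat{a})$ and $s'-\mu(s,a)$ are independent zero-mean Gaussians whose variances $\hat{\sigma}^2(\hat{s},\hat{a})$ and $\sigma^2(s,a)$ add on taking a difference, so Chebyshev delivers the third term with probability at least $1-\epsilon$ (matching the stated form after identifying estimated and true variances at the pairs where they agree). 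The two probabilistic events concern independent randomness — the KDE training data versus the fresh draws $\hat{s}'$ and $s'$ — so multiplying their confidence levels gives the overall probability $(1-\epsilon)(1-\epsilon_{\mathrm{KDE}})$.

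The main obstacle will be making the KDE variance claim rigorous: for a general kernel, $\hat{\mu}(s,a)$ is a weighted average of samples whose underlying means $\mu(s_i,a_i)$ differ from $\mu(s,a)$, producing a bias term that is not accounted for in the stated bound. The cleanest route — and the one that matches the theorem as written — is to invoke the index-function-kernel reduction noted immediately after the KDE definition, under which $\hat{\mu}(s,a)$ becomes the empirical mean of $n_{\mathrm{eff}}(s,a)$ genuine i.i.d.\ draws from $\mathcal{N}(\mu(s,a),\sigma^2(s,a))$, so that Chebyshev applies without bias correction and the probabilistic statements remain exactly independent.
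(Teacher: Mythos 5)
Your proof follows essentially the same route as the paper's: the paper also splits $\hat{s}'-s'$ via the triangle inequality into the Lipschitz drift $\|\hat{\mu}(\hat{s},\hat{a})-\hat{\mu}(s,a)\|$, the KDE estimation error $\|\hat{\mu}(s,a)-\mu(s,a)\|$, and the fresh Gaussian sampling noise, applying Chebyshev to the latter two and multiplying the two confidence levels. Your closing remarks are in fact more careful than the paper itself, which silently ignores both the kernel-induced bias in $\hat{\mu}(s,a)$ (it applies Chebyshev as if the estimator were unbiased) and the hat-placement mismatch between the variances appearing in the derivation and those in the theorem statement.
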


\begin{proof}
    See Appendix~\ref{pf:next_state_bound}.
\end{proof}

Theorem~\ref{theorem: next_state_bound} states that the shifting distance between a state simulated from the estimated model \(\hpp\) and a state collected by interacting with the real environment \(\pp^{*}\) is primarily controlled by three factors: the distance between the data points \((\hat{s}, \ha)\) and \((s, a)\), the variance of the real transition kernel \(\sigma^2\), and the variance of the estimated transition kernel \(\hat{\sigma}^2\). In practice, we can manually bound the variance \(\hat{\sigma}^2\) of the estimated model. The real transition model used in our experiment is deterministic with random initial states, so the terms involving \(\sigma^2\) and \(\hat{\sigma}^2\) do not dominate in our case. In other words, the state shifting can be effectively bounded by using the \ac{ood} data filter, as the dominating term is \(\| (\hat{s}, \hat{a})^\top - (s, a)^\top \|\).

In the remaining analysis, we will show that the distance between the updated parameters of \( Q \)-network using data points from \( \dest \) and \( \dreal \) is bounded by the distance between those data points.

\subsection{Bounding the value function update shifting}
To simplify the analysis, we use \ac{DQN}~\citep{mnih_human-level_2015} as the model-free algorithm instead of \ac{sac} in the theoretical analysis, thereby considering only a finite action space in this and the next subsection. The results of this analysis can also provide insights for \ac{sac}, as \ac{sac} similarly uses two \(Q\)-networks as critics.

To formally state the theorem, we denote one data point from \( \dreal \) as \(d\triangleq (s,a,s')^{\top}\) and one data point from \( \dest \) as \(\hat{d}\triangleq (\hat{s},\hat{a},\hat{s}^{'})^{\top}\).
We define the \( Q \)-network as \(Q (s, a ; \theta)\), where \(\theta\) represents the parameters.
The target value \(y\) is defined as
\begin{equation}
    y (s, a) = r (s, a) + \gamma \max_{a'} Q  (s', a' ; \theta^-),
\end{equation}
where \(s'\) is the next state, \(\theta^-\) are the parameters of the target network, which are periodically updated to match the \( Q \)-network parameters \(\theta\).
The Loss function \(L(\theta)\) of \ac{DQN} is defined as
\begin{equation}
    L (\theta) = \frac{1}{2} (y (s, a) - Q (s, a; \theta))^2.
\end{equation}
The update rule for the parameters of \( Q \)-network is
\begin{equation}
    \theta_{t + 1} \leftarrow \theta_t - \alpha \nabla_{\theta} L (\theta_t).
\end{equation}
By simple derivation, the above update rule becomes:
\begin{equation}
    \theta_{t + 1} \leftarrow \theta_t + \alpha (y (s, a) - Q (s, a ; \theta_t))
    \nabla_{\theta} Q (s, a ; \theta_t)
\end{equation}
The corresponding target and updates using a data point \((\hat{s} , \hat{a} , \hat{s} ')\) from \( \dest \) are:
\begin{equation}
    \hat{y} (\hat{s} , \hat{a} ) = r (\hat{s} , \hat{a} ) + \gamma \max_{\hat{a}'} Q
    (\hat{s} ', \hat{a} ' ; \theta^-),
\end{equation}
and
\begin{equation}
    \hat{\theta}_{t + 1} \leftarrow \theta_t + \alpha (\hat{y} (\hat{s} ,
    \hat{a} ) - Q (\hat{s} , \hat{a}  ; \theta_t)) \nabla_{\theta} Q (\hat{s} ,
    \hat{a}  ; \theta_t).
\end{equation}
Before we proceed, we make the following assumptions on the \( Q \)-network and reward function:
\begin{assumption}\label{ass: lipschiz for Q}
    The reward function, gradient function of the \(Q\)-network, and \(Q\)-network itself are Lipschitz, i.e.,
    \begin{align}
        \|r(s,a)-r(\hs,\hat{a})\|                                                                              & \leq L_1\|(s,a)^\top - (\hs,\ha)^\top\|, \\
        \|\nabla_{\theta} Q(s,a;\theta)-\nabla_{\theta} Q(\hs,\hat{a};\theta))\| & \leq L_2\|(s,a)^\top - (\hs,\ha)^\top\|, \\
        \|Q(s,a;\theta)-Q(\hs,\hat{a};\theta))\|\,                                             & \leq L_3\|(s,a)^\top - (\hs,\ha)^\top\|,
    \end{align}
    for all \((s,a) \in S \times A \) and \((\hat{s},\hat{a}) \in S \times A \).
    In addition, the $\max_{a} Q  (s, a ; \theta^-)$ is also Lipschitz with respect to $s$, i.e.,
    \begin{align}
        \|\max_{a } Q  (s, a ; \theta^-)-\max_{a } Q  (\hs, a ; \theta^-)\|\leq L_4\|s-\hs\|.
    \end{align}
\end{assumption}
The Lipschitz assumption on the reward function is determined by the environment itself. 
The assumption for the \(Q\)-network can be satisfied by regularizing its training, as demonstrated in \cite{zheng_is_2022,gouk2021regularisation}.
\begin{assumption}\label{ass: varialbes bounded}
    There are existing constants $D_1 \geq 0$, $D_2 \geq 0$ and $D_3 \geq 0$, such that, $\|r( \cdot, \cdot )\|\leq D_1$, $\|\nabla_{\theta}Q( \cdot, \cdot ; \theta )\|\leq D_2$ and $\|Q( \cdot, \cdot ; \theta )\|\leq D_3$.
\end{assumption}
The reward function of the \texttt{MuJoCo} environments we considered in the experiment section satisfies Assumption~\ref{ass: varialbes bounded}. The \(Q\)-network can also satisfy this assumption by incorporating a constraint during its training.
Now, we state the theorem:
\begin{theorem}\label{theorem: bound on excess error of value function}
    Under assumption~\ref{ass: lipschiz for Q} and~\ref{ass: varialbes bounded}, for any \(\gamma>0\) and \(\alpha>0\), it holds for any \(\theta^{-}\), \((s,a,s')\) and \((\hat{s} , \hat{a} , \hat{s} ')\):
    \begin{align}
        \| \hat{\theta}_{t + 1} - \theta_{t + 1} \| & \leq C_1(\alpha, \gamma) \| d - \hat{d} \|^2+C_2(\alpha,\gamma)\| d - \hat{d}  \|
    \end{align}
    where
    \begin{align}
        C_1(\alpha, \gamma) & =\alpha  (L_1 L_2 + L_3 L_2+\gamma L_4 L_2)                   \\
        C_2(\alpha,\gamma)  & =\alpha   L_2 D_2                                             \\
                            & \quad+ \alpha\left( \gamma L_1 + L_3 + \gamma L_4 \right) D_1 \\
                            & \quad+ \alpha(1+\gamma) L_2 D_3.
    \end{align}
\end{theorem}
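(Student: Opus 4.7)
The plan is to express the difference \(\hat{\theta}_{t+1} - \theta_{t+1}\) as a single bilinear gap and then decompose that gap so the dependence on \(\|d - \hat{d}\|\) becomes transparent. Writing \(\delta \triangleq y(s,a) - Q(s,a;\theta_t)\), \(g \triangleq \nabla_\theta Q(s,a;\theta_t)\), and defining \(\hat{\delta}, \hat{g}\) analogously at \((\hat{s},\hat{a})\), the two update rules combine to give
\[
    \hat{\theta}_{t+1} - \theta_{t+1} = \alpha \bigl( \hat{\delta}\hat{g} - \delta g \bigr),
\]
so the theorem reduces to bounding the norm of \(\hat{\delta}\hat{g} - \delta g\).

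The central step is the three-term add-and-subtract identity
\[
    \hat{\delta}\hat{g} - \delta g = (\hat{\delta} - \delta)(\hat{g} - g) + \delta (\hat{g} - g) + (\hat{\delta} - \delta)\, g,
\]
which isolates the ``small-times-small'' piece (the source of the quadratic term \(C_1 \|d-\hat{d}\|^2\)) from the two ``small-times-bounded'' pieces (which produce the linear term \(C_2 \|d-\hat{d}\|\)). I would then bound each factor by direct application of Assumptions~\ref{ass: lipschiz for Q} and~\ref{ass: varialbes bounded}: the gradient gap satisfies \(\|\hat{g} - g\| \leq L_2 \|(s,a)^\top - (\hat{s},\hat{a})^\top\|\); the TD-error gap is expanded via the triangle inequality into a reward piece (coefficient \(L_1\)), a \(Q\)-value piece at the current input (coefficient \(L_3\)), and a discounted \(\max_{a'} Q\) piece at the next state (coefficient \(\gamma L_4\)); and the ``absolute'' factors are controlled by \(\|g\| \leq D_2\) and \(|\delta| \leq D_1 + (1+\gamma) D_3\) through the triangle inequality together with Assumption~\ref{ass: varialbes bounded}.

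Two small bookkeeping steps finish the argument. First, because \(d = (s,a,s')^\top\) and \(\hat{d} = (\hat{s},\hat{a},\hat{s}')^\top\), both \(\|(s,a)^\top - (\hat{s},\hat{a})^\top\|\) and \(\|s' - \hat{s}'\|\) are dominated by \(\|d - \hat{d}\|\); this upgrade is what lets the individual-coordinate Lipschitz inequalities combine into a single bound in terms of \(\|d - \hat{d}\|\). Second, multiplying out the three contributions and collecting coefficients yields the quadratic coefficient from \(L_2(L_1 + L_3 + \gamma L_4)\) and the linear coefficient by summing the cross-terms of \(\{L_2, (L_1+L_3+\gamma L_4)\}\) against \(\{D_2, D_1 + (1+\gamma)D_3\}\), which reassembles into the stated \(C_1\) and \(C_2\).

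The argument contains no genuinely difficult analytic step; the main obstacle is simply bookkeeping discipline, specifically (i) choosing the three-term decomposition rather than a two-term one so that a genuine \(\|d-\hat{d}\|^2\) contribution is exposed, and (ii) tracking which Lipschitz constant and which boundedness constant each of the three summands contributes to, so that the coefficients assemble into precisely the stated \(C_1(\alpha,\gamma)\) and \(C_2(\alpha,\gamma)\).
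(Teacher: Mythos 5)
Your proposal is correct and takes essentially the same route as the paper's proof: both arguments reduce the update gap to \(\alpha\|\hat{\delta}\hat{g}-\delta g\|\) and control it by an add-and-subtract (telescoping) decomposition, pairing the Lipschitz constants \(L_1,L_2,L_3,L_4\) with the cross-differences and the bounds \(D_1,D_2,D_3\) with the remaining factors --- the paper merely performs the split per component of the TD target (\(T_1,T_2,T_3\)) whereas you split the whole product once into a quadratic piece and two linear pieces, and both yield the identical \(C_1\). One bookkeeping caveat: your linear coefficient comes out as \(\alpha\bigl[(L_1+L_3+\gamma L_4)D_2+L_2 D_1+(1+\gamma)L_2 D_3\bigr]\), which agrees with what the paper's own appendix actually derives (up to a spurious \(\gamma\) on \(L_1\) there) but not with the stated \(C_2\), which transposes the roles of \(D_1\) and \(D_2\); this is an inconsistency between the paper's theorem statement and its proof rather than a gap in your argument, so you should not claim your coefficients ``reassemble into precisely the stated \(C_2\)'' without noting the swap.
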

\begin{proof}
    The proof is attached in Appendix~\ref{proof: bound on excess error of value function}.
\end{proof}

Combining Theorem~\ref{theorem: next_state_bound} and Theorem~\ref{theorem: bound on excess error of value function}, we derive the following proposition:
\begin{proposition}\label{coro: Q_network_distance}
    With probability \((1 - \epsilon)(1- \epsilon_{\mathrm{KDE}})\), the upper bound in theorem~\ref{theorem: bound on excess error of value function} becomes
    \begin{align}
        \| \hat{\theta}_{t + 1} - \theta_{t + 1} \| & \leq C_1(\alpha, \gamma)C_{3}^{2}\|(s,a)^\top - (\hs,\ha)^\top\|^{2} \\
                                                    & \quad + C_2(\alpha, \gamma)C_3\|(s,a)^\top - (\hs,\ha)^\top\| \\
                                                    & \quad + (\|(s,a)^\top - (\hs,\ha)^\top\|+ 1 )\mathcal{O}(\sigma_{\max}),
    \end{align}
where \(C_3 = 1+L_{S\times A}\) and \(\sigma_{\max}\) is the maximum among \(\sigma(s,a)\), \(\sigma(\hs,\ha)\), \(\hat{\sigma}(s,a)\), \(\sqrt{\hat{\sigma}(s,a)\sigma(s,a)}\) and \(\sqrt{\sigma(s,a)\sigma(\hs,\ha)}\).
\end{proposition}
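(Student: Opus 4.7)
The plan is to chain the two preceding theorems together: Theorem~\ref{theorem: next_state_bound} controls the next-state shift $\|\hat{s}'-s'\|$ in terms of $\|(s,a)^\top-(\hat{s},\hat{a})^\top\|$ and variance contributions, while Theorem~\ref{theorem: bound on excess error of value function} controls the parameter shift in terms of $\|d-\hat{d}\|$. So I would first translate a bound on $\|(s,a)^\top-(\hat{s},\hat{a})^\top\|$ into a bound on the full data-point distance $\|d-\hat{d}\|$, then substitute the latter into the quadratic-plus-linear bound of Theorem~\ref{theorem: bound on excess error of value function}, and finally collect like terms.

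Concretely, the first step is to note that $d-\hat{d}=((s,a,s')-(\hat{s},\hat{a},\hat{s}'))^\top$, so by the triangle inequality
\begin{equation}
    \|d-\hat{d}\|\;\leq\;\|(s,a)^\top-(\hat{s},\hat{a})^\top\|\,+\,\|s'-\hat{s}'\|.
\end{equation}
Applying Theorem~\ref{theorem: next_state_bound} to the second term gives, with probability $(1-\epsilon)(1-\epsilon_{\mathrm{KDE}})$,
\begin{equation}
    \|d-\hat{d}\|\;\leq\;(1+L_{S\times A})\,\|(s,a)^\top-(\hat{s},\hat{a})^\top\|\,+\,R,
\end{equation}
where $R$ collects the two square-root residuals $\sqrt{\sigma^2(s,a)/(n_{\mathrm{eff}}(s,a)\epsilon_{\mathrm{KDE}})}+\sqrt{(\sigma^2(\hat{s},\hat{a})+\hat{\sigma}^2(s,a))/\epsilon}$. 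By the definition of $\sigma_{\max}$ in the statement, $R=\mathcal{O}(\sigma_{\max})$. Setting $C_3=1+L_{S\times A}$, this is exactly the desired preliminary estimate.

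The second step is straightforward substitution into Theorem~\ref{theorem: bound on excess error of value function}: plug $\|d-\hat{d}\|\leq C_3\Delta+R$ (with $\Delta\triangleq\|(s,a)^\top-(\hat{s},\hat{a})^\top\|$) into $C_1(\alpha,\gamma)\|d-\hat{d}\|^2+C_2(\alpha,\gamma)\|d-\hat{d}\|$. Expanding the square yields $C_1 C_3^2\Delta^2+2C_1C_3\Delta R+C_1 R^2$, and the linear part yields $C_2C_3\Delta+C_2 R$. The first two terms match the first two terms in the proposition; the remaining four terms involving $R$ are all $\mathcal{O}(\sigma_{\max})$ or $\mathcal{O}(\sigma_{\max}^2)$ multiplied by either $1$ or $\Delta$, so they can be compactly absorbed into $(\Delta+1)\,\mathcal{O}(\sigma_{\max})$ after bounding $R^2\leq \sigma_{\max}\cdot R$ up to constants, provided $\sigma_{\max}$ is bounded (which is the scenario emphasized in the remark following Theorem~\ref{theorem: next_state_bound}).

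The main obstacle I anticipate is bookkeeping rather than conceptual: one must be careful that the cross term $2C_1C_3\Delta R$ is legitimately absorbed into $\Delta\cdot\mathcal{O}(\sigma_{\max})$ (which requires treating $C_1(\alpha,\gamma)$ as an absolute constant hidden inside the $\mathcal{O}$) and that the quadratic residual $C_1R^2$ becomes $\mathcal{O}(\sigma_{\max})$ only under the implicit assumption that $\sigma_{\max}$ itself is bounded, so that $\sigma_{\max}^2=\mathcal{O}(\sigma_{\max})$. Once the residual terms are collected this way, the three displayed terms of the proposition follow directly, and the probability level $(1-\epsilon)(1-\epsilon_{\mathrm{KDE}})$ is inherited verbatim from the single application of Theorem~\ref{theorem: next_state_bound}.
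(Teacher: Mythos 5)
Your proposal is correct and follows essentially the same route as the paper's proof: bound $\|d-\hat{d}\|$ by the triangle inequality plus Theorem~\ref{theorem: next_state_bound}, substitute into the quadratic-plus-linear bound of Theorem~\ref{theorem: bound on excess error of value function}, expand the square, and absorb all variance residuals into $(\Delta+1)\,\mathcal{O}(\sigma_{\max})$. Your explicit remark that the quadratic residual terms become $\mathcal{O}(\sigma_{\max})$ only when $\sigma_{\max}$ is bounded makes transparent a step the paper performs implicitly.
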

\begin{proof}
    See Appendix~\ref{proof:Q_network_distance}. 
\end{proof} 
The Proposition~\ref{coro: Q_network_distance} states that when both the variances of transition kernel for the real model and estimated model are small enough, the upper bound of the shifting of \(Q\)-network update is dominated by the distance between the data point from \(\dreal\) and the data point from \(\dest\).

Considering the \ac{ood} data filter, one implication of Theorem~\ref{theorem: next_state_bound} is that data simulated from the estimated model closely approximates data interacted from the real model when \((\hs,\ha,\cdot)\in\dreduct\). As a result of that, the \( Q \)-network's parameters \( \hat{\theta}_{t + 1} \) updated by using data \(\dreduct\)  will close to the parameters \( \theta_{t + 1} \) updated by using data \( \dreal \). This increases the reliability and efficiency of the model-free training process. 

In summary, we show that \(\dreduct\) has better quality than \(\dest\) because it mimics \(\dreal\) much closer in terms of \(Q\)-network updates.
The analysis of this section provides a reason why Algorithm~\ref{algo:DataFilter} enhances the performance of Dyna-style algorithms. In the next section, we will show empirically that the \ac{mbpo} with out-of-distribution data filter outperforms \ac{mbpo} even without a model ensemble.

\section{Experiment}\label{sec:exp}

\begin{figure*}[bht]
    \centering
    \begin{subfigure}[b]{0.32\textwidth}
        \centering
        \includegraphics[width=\linewidth]{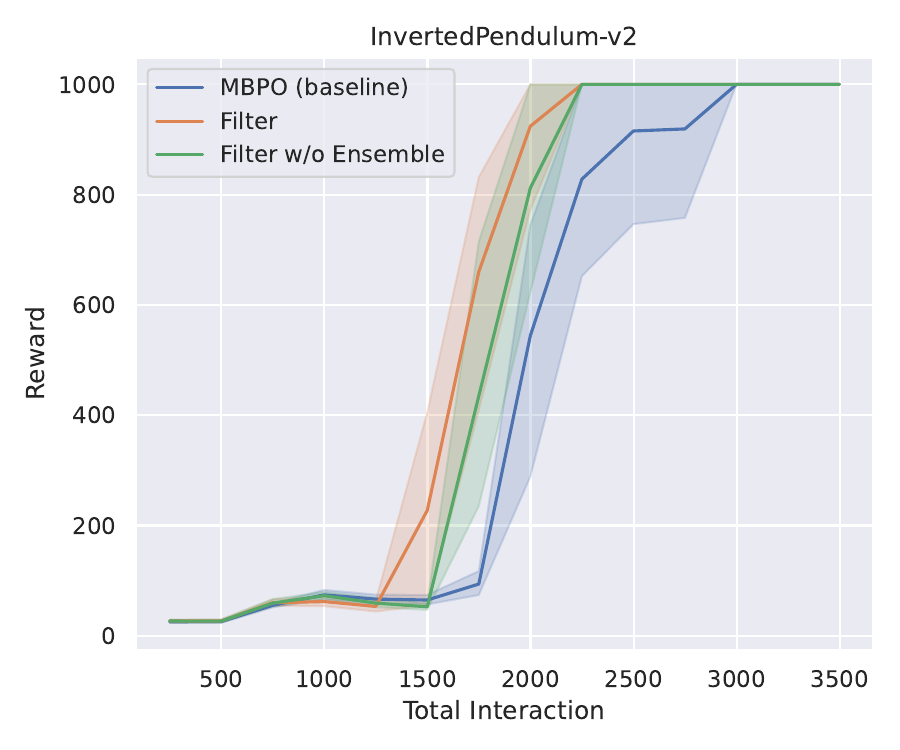}
        \caption{InvertedPendulum-V2}
        \label{fig:exp:invpend}
    \end{subfigure}
    \hfill
    \begin{subfigure}[b]{0.32\textwidth}
        \centering
        \includegraphics[width=\linewidth]{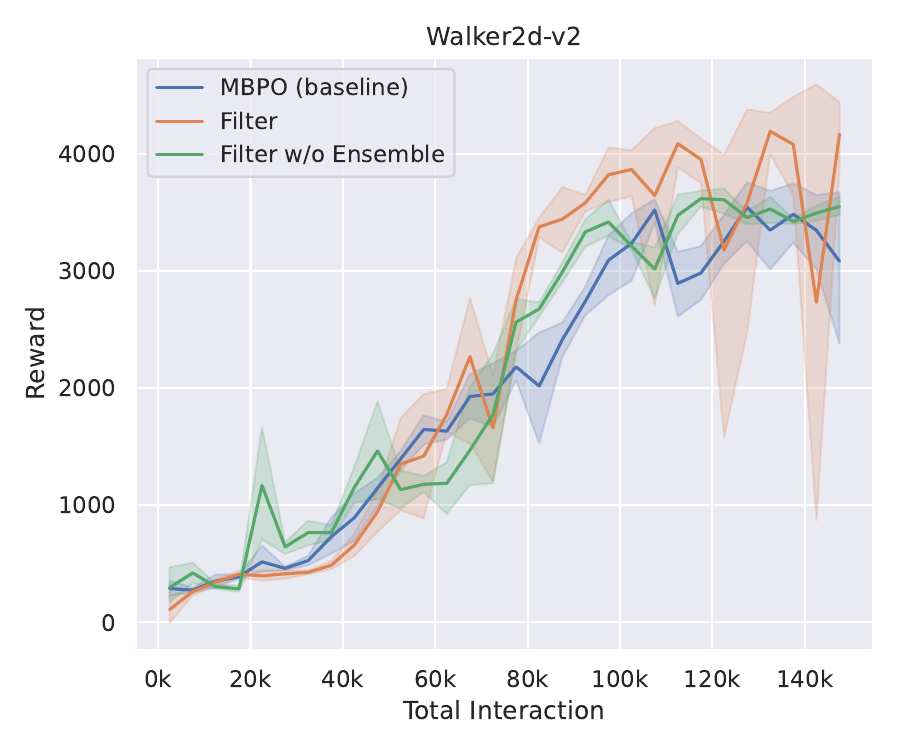}
        \caption{Walker2d-V2}
        \label{fig:exp:walker}
    \end{subfigure}
    \hfill
    \begin{subfigure}[b]{0.32\textwidth}
        \centering
        \includegraphics[width=\linewidth]{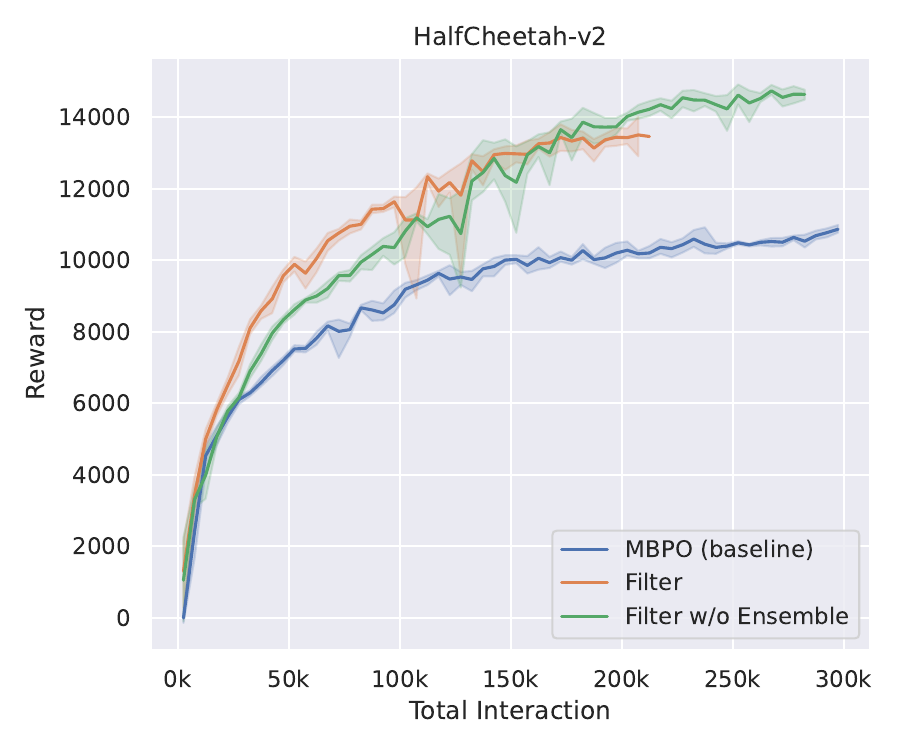}
        \caption{HalfCheetah-V2}
        \label{fig:exp:halfch}
    \end{subfigure}
    
    \caption{Experiment results for different environments in \texttt{MuJoCo}.}
    \label{fig:exp:combined}
\end{figure*}

In this section, we discuss our experimental and implementation principles, including the experiment environments, data rejection rules, important algorithm settings, and baseline algorithms, which we compare against.

\emph{Experiment Environments}: We empirically validate our theoretical results in the \texttt{MuJoCo}~\citep{todorov_mujoco_2012} engine using \texttt{Gymnasium}~\citep{brockman_openai_2016} APIs. We conduct experiments using the following environments in the \texttt{MuJoCo} engine: \texttt{InvertedPendulum-V2}, \texttt{Walker2D-V2}, and \texttt{HalfCheetah-V2}. Detailed descriptions and settings for these environments are available in \citet{todorov_mujoco_2012}. 

\subsection{Out-of-distribution data filter}

In Algorithm~\ref{algo:DataFilter}, we are applying rejections to data samples to achieve the purpose of data filter. Though in Line~\ref{algo:rej_rule} of Algorithm~\ref{algo:DataFilter}, we define the state-action pairs as the data to be taken or rejected, in practice, we find that only considering the state is sufficient. Therefore, in our experiments, we only consider data acceptance and rejection on the level of states instead of state-action pairs. For the choice of rejection level \(\epsilon_k\), we use 2 different strategies, \emph{static data rejection} and \emph{dynamic data rejection}.
In this section, we denote the estimated model in episode \(k\) as \(\hp^k\).

\emph{Static Data Rejection}: Static data rejection fits much closer to the default semantics of data rejection defined in our Algorithm~\ref{algo:DataFilter}. We pick a static value \(\epsilon\) such that if any \(\hat{s} \in \dest\) and  \(\min_{s \in \dreal} \| \hat{s} - s \| > \epsilon\), we will reject \(\hat{s}\).

\emph{Dynamic Data Rejection}: In dynamic data rejection, the rejection level changes as the training process evolves. Because the estimated model \(\hp^k\) becomes closer to the true model \(\pp^*\) as \(k\) increases, we want to choose \(\epsilon_k\) such that less data will be rejected as \(k\) increases. However, finding the optimal value \(\epsilon_k\) for each \(k\) is challenging. From a practical perspective, we first sort all the data points \(\hat{s}\) in \(\dest\) by the metric \(\min_{s \in \dreal} \| \hat{s} - s \|\) in descending order. 
Then, in episode \(1\), we choose \(\epsilon_k\) such that all model-generated data points are eliminated except the first rollout step. In episode \(K\), we accept all model-generated data. For episodes between 1 and \(K\), we use linear interpolation to determine the percentage of data points to eliminate.

In both approaches, we need to efficiently solve the minimization problem \(\min_{s \in \dreal} \| \hat{s} - s \|\). We first considered it as a nearest neighbor problem and approximate the problem as \ac{ann}. We choose one of the most robust and scalable \ac{ann} algorithms, Hierarchical Navigable Small World (HNSW), introduced by \citet{hnsw}. 
%
By applying the principle of skip-lists introduced in \citet{skiplist}, HNSW achieves amortized \(O(log(n))\) time to perform search, insertion, and deletion operations.

In our case, we initialize the HNSW index at the time we create the replay buffer for the data set of the real environment \(\dreal\). Whenever there is a step in the real environment, the new state \(s\) will be inserted into the HNSW index. 
%
%
During the filter process, we query the nearest neighbor from the HNSW with \(\hat{s} \in \dest\) to obtain $\min_{s \in \dreal} \| \hat{s} - s \|$.

\subsection{Tested algorithms and experiment results}

\emph{Our algorithm:} We implement Algorithm~\ref{algo:dyna_ood} using \ac{mbpo} as the Dyna-style algorithm. The parameters are nearly identical to those in \citet{janner_when_2019}, with several modifications to accommodate our data filter. First, we increase the rollout length \(L\) for simulating data from \(\hat{\pp}\). In this way, we exploit the data filter and the ensemble model by simulating longer steps from the ensemble model while filtering the \ac{ood} data.
Second, besides using our data filter together with the model ensemble like in \ac{mbpo}, we also conduct experiments without the model ensemble.
The implementation and parameter details are provided in Appendix~\ref{supp:exp_details}.

\emph{Baseline algorithms:} We compare the \ac{mbpo} algorithm, using the exact parameters from \citet{janner_when_2019,liu_re_2019}. 
Since other Dyna-style algorithms such as Model-Ensemble Trust-Region Policy Optimization \citep{kurutach_model-ensemble_2018} and model-free algorithms such as \ac{sac}~\citep{haarnoja_soft_2018} do not perform better than \ac{mbpo}, we do not replicate their results in this paper.


The experimental results are shown in Fig.~\ref{fig:exp:combined}, where the vertical axis represents the cumulative reward and the horizontal axis indicates the total number of interactions with the real environment. The blue line represents the \ac{mbpo} algorithm, using the same parameters as in \cite{janner_when_2019,liu_re_2019}. The orange line represents the \ac{mbpo} algorithm with a data filter, where we increase the rollout length for a deeper exploration of the estimated model. The green line is similar to the orange line but does not use a model ensemble, significantly reducing computational complexity.
For \texttt{InvertedPendulum-v2}, we conducted 10 experiments with different random seeds, averaged the results, and plotted the lines. The shaded area represents the 95\% confidence interval among these 10 experiments. For the other two environments, we conduct one experiment for each, and the shadow region represents the variance of the testing rewards.

Fig.~\ref{fig:exp:combined} shows that even without the model ensemble, the \ac{mbpo} with the out-of-distribution data filter requires fewer interactions to achieve the same suboptimality compared to \ac{mbpo}. Additionally, as shown in Fig.~\ref{fig:exp:walker} and Fig.~\ref{fig:exp:halfch}, at the end of the training, our method achieves a higher reward compared to other methods.

\section{Conclusion}
In this paper, we proposed an enhancement to Dyna-style model-based reinforcement learning algorithms through the introduction of an out-of-distribution (OOD) data filter. This approach addresses a key limitation of existing methods, where inaccurate model-generated data can hinder the efficiency and effectiveness of model-free training. By filtering out simulated data that significantly diverges from real-environment interactions, our method improves the quality of the training data, leading to faster convergence and higher final performance.

Our theoretical analysis demonstrated that the OOD data filter reduces the drift in rollout trajectories and limits the parametric shift during \(Q\)-network updates, thereby providing a more stable and reliable learning process. Empirical results on various \texttt{MuJoCo} environments validated our approach, showing that the modified \ac{mbpo} algorithm with the \ac{ood} data filter outperforms standard \ac{mbpo}, even without the need for model ensembles.

\bibliography{mbrl_aaai_2024_ref.bib, ref_added.bib}
\newpage
\section*{AAAI25 Reproducibility Checklist}

This paper:
\begin{itemize}
    \item Includes a conceptual outline and/or pseudocode description of AI methods introduced. yes
    \item Clearly delineates statements that are opinions, hypothesis, and speculation from objective facts and results. yes
    \item Provides well marked pedagogical references for less-familiare readers to gain background necessary to replicate the paper. yes
    
\end{itemize}
Does this paper make theoretical contributions? yes
\begin{itemize}
    \item All assumptions and restrictions are stated clearly and formally. yes
    \item All novel claims are stated formally (e.g., in theorem statements). yes
    \item Proofs of all novel claims are included. yes
    \item Proof sketches or intuitions are given for complex and/or novel results. yes
    \item Appropriate citations to theoretical tools used are given. yes
    \item All theoretical claims are demonstrated empirically to hold. yes
    \item All experimental code used to eliminate or disprove claims is included. yes
    
\end{itemize}
Does this paper rely on one or more datasets? no

Does this paper include computational experiments? yes

\begin{itemize}
    \item Any code required for pre-processing data is included in the appendix. yes
    \item All source code required for conducting and analyzing the experiments is included in a code appendix. yes
    \item All source code required for conducting and analyzing the experiments will be made publicly available upon publication of the paper with a license that allows free usage for research purposes. yes
    \item All source code implementing new methods have comments detailing the implementation, with references to the paper where each step comes from. yes
    \item If an algorithm depends on randomness, then the method used for setting seeds is described in a way sufficient to allow replication of results. yes
    \item This paper specifies the computing infrastructure used for running experiments (hardware and software), including GPU/CPU models; amount of memory; operating system; names and versions of relevant software libraries and frameworks. yes
    \item This paper formally describes evaluation metrics used and explains the motivation for choosing these metrics. yes
    \item Analysis of experiments goes beyond single-dimensional summaries of performance (e.g., average; median) to include measures of variation, confidence, or other distributional information. yes
    \item The significance of any improvement or decrease in performance is judged using appropriate statistical tests (e.g., Wilcoxon signed-rank). yes
    \item This paper lists all final (hyper-)parameters used for each model/algorithm in the paper’s experiments. yes
    \item This paper states the number and range of values tried per (hyper-) parameter during development of the paper, along with the criterion used for selecting the final parameter setting. yes

\end{itemize}

\newpage
\appendix
\onecolumn

\appendix
\section{Lemmas}\label{proof: upper bound for joint normal}

\begin{proposition} \label{prpo:next_state_bound}
    For any random variable \(s\sim\mathcal{N} (\mu,\sigma^{2}) \) and \(\hat{s}\sim\mathcal{N} (\hat{\mu},\hat{\sigma}^{2}) \), while \(s\) and \(\hat{s}\) are independent, then with probability \(1 - \epsilon\),
    \[ \| s - \hat{s} \| \leq \sqrt{\frac{\sigma^2 + \hat{\sigma}^2}{\epsilon}}
        + \| \mu - \hat{\mu} \| . \]
\end{proposition}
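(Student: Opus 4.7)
\textbf{Proof plan for Proposition~\ref{prpo:next_state_bound}.}

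The plan is to decompose \(s-\hat{s}\) into a deterministic mean-difference part and a zero-mean stochastic part, then apply a concentration inequality only to the stochastic part. Concretely, I would write
\[
s - \hat{s} = (s - \mu) - (\hat{s} - \hat{\mu}) + (\mu - \hat{\mu}),
\]
and apply the triangle inequality to obtain
\[
\|s - \hat{s}\| \leq \|(s-\mu) - (\hat{s} - \hat{\mu})\| + \|\mu - \hat{\mu}\|.
\]
It then suffices to control the random term \(X \triangleq (s-\mu) - (\hat{s}-\hat{\mu})\) with high probability.

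Next, I would observe that because \(s\) and \(\hat{s}\) are independent Gaussians, \(X\) is zero-mean with variance \(\sigma^2 + \hat{\sigma}^2\). (The Gaussianity itself is not strictly needed for the bound we want; only zero mean and the variance computation, which follow from independence.) Hence \(\mathbb{E}[X^2] = \sigma^2 + \hat{\sigma}^2\), and Markov's inequality applied to the nonnegative random variable \(X^2\) yields
\[
\mathbb{P}\!\left(X^2 \geq t\right) \leq \frac{\sigma^2 + \hat{\sigma}^2}{t}
\quad \text{for all } t>0.
\]
Choosing \(t = (\sigma^2 + \hat{\sigma}^2)/\epsilon\) makes the right-hand side equal to \(\epsilon\), so with probability at least \(1-\epsilon\),
\[
\|X\| \leq \sqrt{\frac{\sigma^2 + \hat{\sigma}^2}{\epsilon}}.
\]

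Combining this bound with the triangle inequality above produces exactly the claimed inequality on the event of probability at least \(1-\epsilon\). There is essentially no hard step here: the only subtlety is making sure one applies Markov (equivalently, the one-sided Chebyshev-type bound on \(X^2\)) to the centered difference \(X\) rather than to \(s-\hat{s}\) directly, since the latter has nonzero mean whenever \(\mu \neq \hat{\mu}\) and would give a looser bound. The independence of \(s\) and \(\hat{s}\) is used in exactly one place, namely to conclude \(\mathrm{Var}(X) = \sigma^2 + \hat{\sigma}^2\) without a covariance cross-term; if independence were dropped, one would need to add \(-2\,\mathrm{Cov}(s,\hat{s})\) inside the square root.
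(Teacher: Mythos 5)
Your proposal is correct and follows essentially the same route as the paper's proof: center the difference \(s-\hat{s}\) by subtracting \(\mu-\hat{\mu}\), note the centered variable has variance \(\sigma^2+\hat{\sigma}^2\) by independence, and apply Chebyshev's inequality (your Markov bound on \(X^2\) is exactly this) before adding back \(\|\mu-\hat{\mu}\|\) via the triangle inequality. Your side remark that Gaussianity is not actually needed is a fair observation but does not change the argument.
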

\begin{proof}
    To make the mean zero, we can consider $\delta s^{\prime} = \delta s - (\mu
        - \hat{\mu})$. This will give us: $\delta s^{\prime} \sim N (0, \sigma^2 +
        \hat{\sigma}^2)$. Now, using the Chebyshev inequality, for any $\epsilon >
        0$:
    \[ P (| \delta s^{\prime} | \geq k) \leq \frac{\text{Var} (\delta
            s^{\prime})}{k^2} . \]
    Substituting the variance:
    \[ P (| \delta s^{\prime} | \geq k) \leq \frac{\sigma^2 +
            \hat{\sigma}^2}{k^2} . \]
    To find $k$ such that the probability is $1 - \epsilon$:
    \[ \frac{\sigma^2 + \hat{\sigma}^2}{k^2} = \epsilon \Rightarrow k =
        \sqrt{\frac{\sigma^2 + \hat{\sigma}^2}{\epsilon}} . \]
    Therefore, with probability $1 - \epsilon$:
    \[ | \delta s^{\prime} | \leq \sqrt{\frac{\sigma^2 +
                \hat{\sigma}^2}{\epsilon}} . \]
    Since $\delta s = \delta s^{\prime} + (\mu - \hat{\mu})$, we need to add $\|
        \mu - \hat{\mu} \|$ to the bound:
    \[ | \delta s |\leq \sqrt{\frac{\sigma^2 + \hat{\sigma}^2}{\epsilon}}
        + \| \mu - \hat{\mu} \| . \]
    Thus, with probability $1 - \epsilon$:
    \[ |s - \hat{s} |\leq \sqrt{\frac{\sigma^2 +
                \hat{\sigma}^2}{\epsilon}} + \| \mu - \hat{\mu} \| . \]

\end{proof}

\section{Proof of Theorem~\ref{theorem: next_state_bound}}\label{pf:next_state_bound}
\begin{proof}
    By Proposition~\ref{prpo:next_state_bound}, the following inequality holds for all \( (s,a,\cdot) \in \dreal \) and \(  (\hs,\hat{a},\cdot) \in \dest \)
    \begin{align}
        \| \hat{s}' (\hat{s}, \ha) - s' (s, a) \|
         & \leq
        \sqrt{\frac{\sigma^2 (\hat{s},\ha) +
        \hat{\sigma}^2 (s,a)}{\epsilon}} \\
         & \quad+
        \| \muhat (\hat{s},\ha) - \mu (s,a)
        \|
    \end{align}
    with right term being bounded by:
    \begin{align}
        \|
        \muhat (\hat{s},\ha) - \mu(s,a)
        \|
         & \leq
        \|
        \muhat (\hat{s},\ha) - \muhat (s,a)
        \|         \\
         & \quad +
        \|
        \muhat (s,a) - \mu  (s,a)
        \|
    \end{align}
    The first term of the right side can be bounded by Assumption~\ref{ass:trans_lips}:
    \begin{align}
        \|
        \muhat(\hat{s},\ha) - \muhat (s,a)
        \|
         & \leq  L_{S \times A} \| (\hs,\ha)^\top -(s,a)^\top\|.
    \end{align}
    The second term \( \| \mu (s,a) - \hat{\mu} (s,a) \| \) stands for estimation error.
    Using Chebyshev's inequality, we get the following bounds with probability
    \(1 - \epsilon_{\mathrm{KDE}}\):

    \[
        (\mu (s,a) - \hat{\mu} (s,a) )^2 \leq \frac{\sigma^2(s,a)}{n_{\mathrm{eff}} (s, a) \epsilon_{\mathrm{KDE}}} .
    \]
    Combining all these upper bounds gives us:
    \begin{align}
        \| \hat{s}' (\hat{s}, \hat{a}) - s' (s, a) \|
         & \leq
        L_{S\times A} \| (\hat{s},\hat{a})^\top - (s,a)^\top \|                                                                     \\
         & \quad +
        \sqrt{\frac{\sigma^2 (s,a)}{n_{\mathrm{eff}} (s, a) \epsilon_{\mathrm{KDE}}}} \\
         & \quad +
        \sqrt{\frac{\sigma^2(\hat{s},\hat{a}) +
        \hat{\sigma}^2 (s,a)}{\epsilon}}.
    \end{align}
\end{proof}

\section{Proof of Theorem~\ref{theorem: bound on excess error of value function}}
\begin{proof}\label{proof: bound on excess error of value function}
    When the parameters of \(Q\)-Network get updated under real environment and estimated model, the difference is:

    \begin{align}
        \| \hat{\theta}_{t + 1} - \theta_{t + 1} \| & = \left\| \theta_t + \alpha
        (\hat{y} (\hat{s} , \hat{a} ) - Q (\hat{s}, \hat{a} ; \theta_t))
        \nabla_{\theta} Q (\hat{s}, \hat{a} ; \theta_t) - \theta_t - \alpha (y
        (s, a) - Q (s, a; \theta_t)) \nabla_{\theta} Q (s, a ; \theta_t) \right\|                                        \\
                                                    & = \left\| \alpha (\hat{y} (\hat{s}, \hat{a}) - Q (\hat{s}, \hat{a}
        ; \theta_t)) \nabla_{\theta} Q (\hat{s}, \hat{a} ; \theta_t) - \alpha (y
        (s, a) - Q (s, a; \theta_t)) \nabla_{\theta} Q (s, a ; \theta_t) \right\|
    \end{align}
    After plugging in the corresponding target value functions, it becomes:
    \begin{align}
        \| \hat{\theta}_{t + 1} - \theta_{t + 1} \| & =\alpha \left\|T_{1}(s,a,\hat{s},\hat{a};\theta)+ T_{2}(s,a,\hat{s},\hat{a};\theta)+T_{3}(s,a,\hat{s},\hat{a};\theta)\right\| \\
                                                    & \leq\alpha \left\|T_1 \right\|+\left\|T_2 \right\|+\left\|T_3 \right\|
    \end{align}
    where
    \begin{align}
        T_{1}(s,a,\hat{s},\hat{a};\theta) & = r (\hat{s}, \hat{a})
        \nabla_{\theta} Q (\hat{s}, \hat{a} ; \theta_t) - r (s, a)
        \nabla_{\theta} Q (s, a ; \theta_t),                                                                                                                                             \\
        T_{2}(s,a,\hat{s},\hat{a};\theta) & = \gamma
        \max_{a'} Q  (\hat{s}', \hat{a}' ; \theta^-) \nabla_{\theta} Q
        (\hat{s}, \hat{a} ; \theta_t) - \gamma \max_{a'} Q  (s', a' ; \theta^-)
        \nabla_{\theta} Q (s, a ; \theta_t),                                                                                                                                             \\
        T_{3}(s,a,\hat{s},\hat{a};\theta) & =  Q (s, a ; \theta_t) \nabla_{\theta} Q (s, a ; \theta_t)  -Q (\hat{s},\hat{a} ; \theta_t) \nabla_{\theta} Q (\hat{s}, \hat{a} ; \theta_t).
    \end{align}
    It's obvious that \(\|(s,a)^{\top}-(\hs,\ha)^{\top}\|\leq\|d-\hat{d}\|\). Then under assumption~\ref{ass: lipschiz for Q}, we have
    \begin{align}
        \left\|T_1(s,a,\hat{s},\hat{a};\theta)\right\| & =\left( r (\hat{s}_1, \hat{a}_1) \nabla_{\theta} Q (\hat{s}_1, \hat{a}_1 ;\theta_t) - r (\hat{s}_1, \hat{a}_1) \nabla_{\theta} Q (s, a ; \theta_t) + r(\hat{s}_1, \hat{a}_1) \nabla_{\theta} Q (s, a ; \theta_t) - r (s, a)\nabla_{\theta} Q (s, a ; \theta_t) \right) \\
                                                       & \leq \| r (\hat{s}_1, \hat{a}_1) \| L_2 \|d-\hat{d}\| + L_1 \|d-\hat{d}\| \left\|\nabla_{\theta} Q (s, a ; \theta_t) \right\|
        \\
        \left\|T_2(s,a,\hat{s},\hat{a};\theta)\right\| & \leq\left\| \max_{a_1'} Q  (\hat{s}_1', \hat{a}_1' ; \theta^-) \nabla_{\theta} Q(\hat{s}_1, \hat{a}_1 ; \theta_t) - \max_{a_1'} Q  (\hat{s}_1',\hat{a}_1' ;\theta^-) \nabla_{\theta} Q (s, a ; \theta_t)\right\|                                                       \\
                                                       & \quad+\left\| \max_{a_1'} Q (\hat{s}_1',\hat{a}_1' ; \theta^-)\nabla_{\theta} Q (s, a ; \theta_t) - \max_{a'} Q  (s', a' ;\theta^-) \nabla_{\theta} Q (s, a ; \theta_t)\right\|                                                                                        \\
                                                       & \leq \left\| \max_{a_1'} Q  (\hat{s}_1', \hat{a}_1' ; \theta^-)  \right\| L_2\|d-\hat{d}\| + \left\| \nabla_{\theta} Q (s, a ;\theta_t) \right\| L_4 \|s-\hat{s}\|                                                                                                     \\
        \left\|T_3(s,a,\hat{s},\hat{a};\theta)\right\| & \leq\left\| Q (\hat{s}_1, \hat{a}_1 ; \theta_t)\nabla_{\theta} Q (\hat{s}_1, \hat{a}_1 ;\theta_t) - Q (s, a ; \theta_t)  \nabla_{\theta} Q (\hat{s}_1, \hat{a}_1 ;\theta_t) \right\|                                                                                   \\
                                                       & \quad+ \left\|Q (s, a ;\theta_t)  \nabla_{\theta} Q (\hat{s}_1, \hat{a}_1 ;\theta_t) - Q (s, a ; \theta_t) \nabla_{\theta} Q (s, a ;\theta_t)\right\|                                                                                                                  \\
                                                       & \leq L_3 \|d-\hat{d}\| \left\| \nabla_{\theta} Q(\hat{s}_1, \hat{a}_1 ; \theta_t)\right\| + L_2 \|d-\hat{d}\| \| Q (s, a ; \theta_t) \|
    \end{align}
    Let's write $\delta d=\|d-\hat{d}\|$ and $\delta s=\|s-\hat{s}\|$ in short. Combing all three inequalities, together with assumption~\ref{ass: lipschiz for Q} we have
    \begin{align}
        \| \hat{\theta}_{t + 1} - \theta_{t + 1} \| & \leq \alpha \| r(\hat{s}_1, \hat{a}_1) \| L_2 \delta d + L_1 \delta d
        \left\| \nabla_{\theta} Q (s, a ; \theta_t) \right\|                                                                                                                                                                               \\
                                                    & \quad+ \alpha\gamma \left(\left\| \max_{a_1'} Q  (\hat{s}_1', \hat{a}_1'; \theta^-)  \right\| L_2 \delta d + \left\|\nabla_{\theta} Q (s, a ;\theta_t) \right\| L_4 \delta s \right) \\
                                                    & \quad+\alpha L_3 \delta d \left\|\nabla_{\theta} Q (\hat{s}_1, \hat{a}_1 ;\theta_t) \right\| + L_2 \delta d \|Q (s, a ; \theta_t) \|                                                 \\
                                                    & \leq \alpha (L_1 \delta d + r (s, a)) L_2 \delta d + \alpha \gamma
        (L_1 \delta d + L_4 \delta s) \left\|
        \nabla_{\theta} Q (s, a ; \theta_t) \right\|                                                                                                                                                                                       \\
                                                    & \quad+ \alpha \gamma ( L_4 \delta
        s + \|  \max_{a } Q  (s, a ; \theta^-) \|) L_2 \delta d + \alpha L_3
        \delta d \left( L_2 \delta d + \left\| \nabla_{\theta} Q (s, a ; \theta_t)
        \right\| \right)                                                                                                                                                                                                                   \\
                                                    & \quad+ \alpha L_2 \delta d \| Q (s, a ; \theta_t)\|                                                                                                                                  \\
                                                    & \leq \alpha (L_1 L_2 + L_3 L_2 + \gamma L_4 L_2) {(\delta d)}^{2}  + \alpha r (s, a)
        L_2 \delta d                                                                                                                                                                                                                       \\
                                                    & \quad+\alpha (\gamma L_1 + L_3 + \gamma L_4)  \left\|
        \nabla_{\theta} Q (s, a ; \theta_t) \right\|\delta d + \alpha \gamma \| \max_{a } Q
        (s, a ; \theta^-) \| L_2 \delta d + \alpha L_2  \| Q (s, a ; \theta_t)
        \|\delta d
    \end{align}
    Adding assumption~\ref{ass: varialbes bounded} to the upper bound then we derive:
    \begin{align}
        \| \hat{\theta}_{t + 1} - \theta_{t + 1} \| & \leq C_1(\alpha, \gamma) \| d - \hat{d} \|^2+C_2(\alpha,\gamma)\| d - \hat{d}  \|
    \end{align}
    where
    \begin{align}
        C_1(\alpha, \gamma) & =\alpha  (L_1 L_2 + L_3 L_2+\gamma L_4 L_2)                   \\
        C_2(\alpha,\gamma)  & =\alpha   L_2 D_2                                             \\
                            & \quad+ \alpha\left( \gamma L_1 + L_3 + \gamma L_4 \right) D_1 \\
                            & \quad+ \alpha(1+\gamma) L_2 D_3.
    \end{align}
\end{proof}

\section{Proof of Proposition~\ref{coro: Q_network_distance}}\label{proof:Q_network_distance}
\begin{proof}
    From Theorem~\ref{theorem: bound on excess error of value function}, with probability \((1-\epsilon)(1-\epsilon_{\mathrm{KDE}})\) we have 
\begin{align}
    \begin{array}{ll}
  \|d - \hat{d} \| & \leq \|(\hat{s}, \hat{a})^{\top} - (s, a)^{\top} \| + \|
  \hat{s}' - s' \|\\
  & \leq \|(\hat{s}, \hat{a})^{\top} - (s, a)^{\top} \| + L_{S \times A}
  \|(\hat{s}, \hat{a})^{\top} - (s, a)^{\top} \| + \sqrt{\frac{\sigma^2 (s,
  a)}{n_{\mathrm{eff}} (s, a) \epsilon_{\mathrm{KDE}}}} + \sqrt{\frac{\sigma^2
  (\hat{s}, \hat{a})+\hat{\sigma}^2 (s,a)}{\epsilon}}\\
  & \leq (1 + L_{S \times A})  \|(\hat{s}, \hat{a})^{\top} - (s, a)^{\top} \|
  + \sqrt{\frac{\sigma^2 (s, a)}{n_{\mathrm{eff}} (s, a)
  \epsilon_{\mathrm{KDE}}}} + \sqrt{\frac{\sigma^2 (\hat{s},\hat{a})+\hat{\sigma}^2 (s,a)}{\epsilon}}
\end{array}
\end{align}
Taking square on both side, we get
\begin{align}
    \begin{array}{ll}
  \|d - \hat{d} \|^2 & \leq \left( (1 + L_{S \times A}) \|(\hat{s},
  \hat{a})^{\top} - (s, a)^{\top} \|+ \sqrt{\frac{\sigma^2 (s,
  a)}{n_{\mathrm{eff}} (s, a) \epsilon_{\mathrm{KDE}}}} + \sqrt{\frac{\sigma^2
  (\hat{s}, \hat{a})+\hat{\sigma}^2 (s,a)}{\epsilon}} \right)^2\\
  & \leq ((1 + L_{S \times A}) \|(\hat{s}, \hat{a})^{\top} - (s, a)^{\top}
  \|)^2 + 2 (1 + L_{S \times A})  \|(\hat{s}, \hat{a})^{\top} - (s, a)^{\top}
  \| \sqrt{\frac{\sigma^2 (s, a)}{n_{\mathrm{eff}} (s, a)
  \epsilon_{\mathrm{KDE}}}}\\
  & \quad + \frac{\sigma^2 (s, a)}{n_{\mathrm{eff}} (s, a)
  \epsilon_{\mathrm{KDE}}} + 2 (1 + L_{S \times A})  \|(\hat{s},
  \hat{a})^{\top} - (s, a)^{\top} \| \sqrt{\frac{\sigma^2 (\hat{s},
  \hat{a})+\hat{\sigma}^2 (s,a)}{\epsilon}}\\
  & \quad + \frac{\sigma^2 (\hat{s}, \hat{a})+\hat{\sigma}^2 (s,a)}{\epsilon} +
  \sqrt{\frac{\sigma^2 (\hat{s}, \hat{a})+\hat{\sigma}^2 (s,a)}{\epsilon} \frac{\sigma^2 (s,
  a)}{n_{\mathrm{eff}} (s, a) \epsilon_{\mathrm{KDE}}}}\\
  & = ((1 + L_{S \times A}) \|(\hat{s}, \hat{a})^{\top} - (s, a)^{\top} \|)^2
  + 2 (1 + L_{S \times A}) \left( \sqrt{\frac{\sigma^2 (s,
  a)}{n_{\mathrm{eff}} (s, a) \epsilon_{\mathrm{KDE}}}} + \sqrt{\frac{\sigma^2
  (\hat{s}, \hat{a})+\hat{\sigma}^2 (s,a)}{\epsilon}} \right) \|(\hat{s}, \hat{a})^{\top} - (s,
  a)^{\top} \|\\
  & \quad + \frac{\sigma^2 (s, a)}{n_{\mathrm{eff}} (s, a)
  \epsilon_{\mathrm{KDE}}} + \frac{\sigma^2 (\hat{s}, \hat{a})+\hat{\sigma}^2 (s,a)}{\epsilon} +
  \sqrt{\frac{\sigma^2 (\hat{s}, \hat{a})+\hat{\sigma}^2 (s,a)}{\epsilon} \frac{\sigma^2 (s,
  a)}{n_{\mathrm{eff}} (s, a) \epsilon_{\mathrm{KDE}}}}
\end{array}
\end{align}
Let \(C_3=1 + L_{S \times A}\), then 
\begin{align}
    \begin{array}{ll}
    \| \hat{\theta}_{t + 1} - \theta_{t + 1} \|&\leq 
  C_1 \|d - \hat{d} \|^2 + C_2 \|d - \hat{d} \| \\
  & \leq C_1 (\alpha, \gamma)
  (C_3 \|(\hat{s}, \hat{a})^{\top} - (s, a)^{\top} \|)^2\\
  & \quad + C_3 \left( 2 C_1 (\alpha, \gamma) \sqrt{\frac{\sigma^2 (s,
  a)}{n_{\mathrm{eff}} (s, a) \epsilon_{\mathrm{KDE}}}} + 2 C_1 (\alpha,
  \gamma) \sqrt{\frac{\sigma^2 (\hat{s}, \hat{a})+\hat{\sigma}^2 (s,a)}{\epsilon}} + C_2 (\alpha,
  \gamma) \right) \|(\hat{s}, \hat{a})^{\top} - (s, a)^{\top} \|\\
  & \quad + C_1 (\alpha, \gamma) \left( \frac{\sigma^2 (s,
  a)}{n_{\mathrm{eff}} (s, a) \epsilon_{\mathrm{KDE}}} + \frac{\sigma^2
  (\hat{s}, \hat{a})+\hat{\sigma}^2 (s,a)}{\epsilon} + \sqrt{\frac{\sigma^2 (\hat{s},
  \hat{a})+\hat{\sigma}^2 (s,a)}{\epsilon} \frac{\sigma^2 (s, a)}{n_{\mathrm{eff}} (s, a)
  \epsilon_{\mathrm{KDE}}}} \right)\\
  & \quad + C_2 (\alpha, \gamma) \left( \sqrt{\frac{\sigma^2 (s,
  a)}{n_{\mathrm{eff}} (s, a) \epsilon_{\mathrm{KDE}}}} + \sqrt{\frac{\sigma^2
  (\hat{s}, \hat{a})+\hat{\sigma}^2 (s,a)}{\epsilon}} \right)\\
  &\leq  C_1(\alpha, \gamma)C_{3}^{2}\|(s,a)^\top - (\hs,\ha)^\top\|^{2} \\
  &\quad +  C_3 \left( 2 C_1 (\alpha, \gamma) \sqrt{\frac{\sigma^2 (s,
  a)}{n_{\mathrm{eff}} (s, a) \epsilon_{\mathrm{KDE}}}} + 2 C_1 (\alpha,
  \gamma) \left(\sqrt{\frac{\sigma^2 (\hat{s}, \hat{a})}{\epsilon}}+\sqrt{\frac{\hat{\sigma}^2 (s,a)}{\epsilon}}\right) + C_2 (\alpha,
  \gamma) \right) \|(\hat{s}, \hat{a})^{\top} - (s, a)^{\top} \|\\
  & \quad + C_1 (\alpha, \gamma) \left( \frac{\sigma^2 (s,
  a)}{n_{\mathrm{eff}} (s, a) \epsilon_{\mathrm{KDE}}} + \frac{\sigma^2
  (\hat{s}, \hat{a})+\hat{\sigma}^2 (s,a)}{\epsilon} + \left(\sqrt{\frac{\sigma^2 (\hat{s},
  \hat{a})}{\epsilon}}+\sqrt{\frac{\hat{\sigma}^2 (s,a)}{\sigma}}\right) \sqrt{\frac{\sigma^2 (s, a)}{n_{\mathrm{eff}} (s, a)
  \epsilon_{\mathrm{KDE}}}} \right)\\
  & \quad + C_2 (\alpha, \gamma) \left( \sqrt{\frac{\sigma^2 (s,
  a)}{n_{\mathrm{eff}} (s, a) \epsilon_{\mathrm{KDE}}}} + \sqrt{\frac{\sigma^2 (\hat{s}, \hat{a})}{\epsilon}}+\sqrt{\frac{\hat{\sigma}^2 (s,a)}{\epsilon}}\right)\\
  &= C_1(\alpha, \gamma)C_{3}^{2}\|(s,a)^\top - (\hs,\ha)^\top\|^{2} \\
     &\quad+C_2(\alpha, \gamma)C_3\|(s,a)^\top - (\hs,\ha)^\top\|
  \\
    &\quad +\left(\mathcal{O}(\sigma(s,a))+\mathcal{O}(\sigma(\hs,\ha))+\mathcal{O}(\hat{\sigma}(s,a))\right)\|(s,a)^\top - (\hs,\ha)^\top\|\\
        &\quad+\mathcal{O}(\sigma(s,a))+\mathcal{O}(\sigma(\hs,\ha))+\mathcal{O}(\hat{\sigma}(s,a))\\
        &\quad+\mathcal{O}(\sqrt{\hat{\sigma}(s,a)\sigma(s,a)})+\mathcal{O}(\sqrt{\sigma(s,a)\sigma(\hs,\ha))}\\
    &= C_1(\alpha, \gamma)C_{3}^{2}\|(s,a)^\top - (\hs,\ha)^\top\|^{2} \\
     &\quad+C_2(\alpha, \gamma)C_3\|(s,a)^\top - (\hs,\ha)^\top\|\\
     &\quad+\mathcal{O}(\sigma_{\max})\|(s,a)^\top - (\hs,\ha)^\top\|+\mathcal{O}(\sigma_{\max}),
\end{array}
\end{align}
where \(\sigma_{\max}=\max\left(\sigma(s,a), \sigma(\hs,\ha), \hat{\sigma}(s,a), \sqrt{\hat{\sigma}(s,a)\sigma(s,a)}, \sqrt{\sigma(s,a)\sigma(\hs,\ha)}\right)\).
\end{proof}

\section{Experiment details}\label{supp:exp_details}

We use the \texttt{PyTorch} re-implementation of \ac{mbpo} by \citet{liu_re_2019} as the baseline algorithm. We added support for the \texttt{HalfCheetah-V2} environment based on the original implementation by \citet{janner_when_2019}. The hyper-parameters for the baseline algorithm are identical to those in \citet{janner_when_2019} and are shown in \Cref{tab:hyper-invped}. The explanation of symbols is provided in \Cref{tab:hyper-explain}. The definitions of ensemble size \(B\) and policy updates per environment step \(G\) can be found in Algorithm 2 of \citet{janner_when_2019}.

\begin{table}[htb]
\label{tab:hyper-explain}
\centering
\caption{Explanation of Hyper-parameters}
\begin{tabular}{cl}
\toprule
Hyperparameter & Explanation \\
\midrule
$K$ & number of epochs \\
$H$ & environment steps per epoch \\
$M$ & model rollouts per environment step \\
$B$ & ensemble size \\
$G$ & policy updates per environment step \\
$L$ & model horizon (rollout length) \\
$\epsilon$ & rejection level (if using static rejection rule) \\
\bottomrule
\end{tabular}
\end{table}

\begin{table}[htb]
\centering
\caption{Experiment Hyper-parameters.}\label{tab:hyper-invped}
\begin{tabular}{lrccccccc}
\toprule
\multicolumn{1}{c}{\multirow{2}{*}{Environment}} & \multicolumn{1}{c}{\multirow{2}{*}{Algorithm}} & \multicolumn{7}{c}{Parameters} \\
\cmidrule{3-9}
& & $K$ & $H$ & $M$ & $B$ & $G$ & $L$ & $\epsilon$ \\
\midrule
\multirow{3}{*}{InvertedPendulum-V2} & \ac{mbpo} & 20 & 250 & 400 & 7 & 20 & 1 & N/A\\
& Data Filter & 20 & 250 & 400 & 7 & 20 & 5 & Dynamic\\
& Data Filter w/o Ensemble & 20 & 250 & 400 & 1 & 20 & 5 & Dynamic \\
\midrule
\multirow{3}{*}{Walker2d-V2} & \ac{mbpo} & 150 & 1000 & 400 & 7 & 20 & 1 & N/A \\
& Data Filter & 150 & 1000 & 400 & 7 & 20 & 2 & Dynamic \\
& Data Filter w/o Ensemble & 150 & 1000 & 400 & 1 & 20 & 10 & Dynamic \\
\midrule
\multirow{3}{*}{HalfCheetah-V2} & \ac{mbpo} & 300 & 1000 & 400 & 7 & 40 & 1 & N/A\\
& Data Filter & 300 & 1000 & 400 & 7 & 40 & 10 & 5 \\
& Data Filter w/o Ensemble & 300 & 1000 & 400 & 1 & 40 & 10 & 5 \\
\bottomrule
\end{tabular}
\end{table}

The experiments are conducted on a computer with a single \texttt{NVIDIA RTX 3090} GPU and a single \texttt{Intel i7-7800X} CPU, running \texttt{Ubuntu 20.04 LTS}. 
%



\end{document}